\title{Lifted Regression/Reconstruction Networks}
\author{%
  Rasmus Kj\ae{}r H\o{}ier\thanks{This work was partially supported by the Wallenberg AI, Autonomous Systems and Software Program (WASP) funded by the Knut and Alice Wallenberg Foundation}
  \qquad {Christopher Zach}
  \\
  Chalmers University of Technology\\
  Gothenburg, Sweden \\
  \texttt{\{hier,zach\}@chalmers.se}
}
\theoremstyle{plain}
\newtheorem{corollary}{Corollary}
\newtheorem{lemma}{Lemma}
\newtheorem{theorem}{Theorem}
\theoremstyle{definition}
\newtheorem{remark}{Remark}
\providecommand{\norm}[1]{\lVert#1\rVert}
\providecommand{\Norm}[1]{\ensuremath{\left\lVert#1\right\rVert}}
\def\m#1{\ensuremath{\mathtt{#1}}}
\def\v#1{\ensuremath{\mathbf{#1}}}
\def\mI{\m I}
\def\sC{\ensuremath{\mathcal{C}}}
\def\cP{\ensuremath{\mathcal{P}}}
\begin{document}

\maketitle

\begin{abstract}
  In this work we propose \emph{lifted regression/reconstruction networks}
  (LRRNs), which combine lifted neural networks with a guaranteed Lipschitz
  continuity property for the output layer. Lifted neural networks explicitly
  optimize an energy model to infer the unit activations and therefore---in
  contrast to standard feed-forward neural networks---allow bidirectional
  feedback between layers. So far lifted neural networks have been modelled
  around standard feed-forward architectures. We propose to take further
  advantage of the feedback property by letting the layers simultaneously
  perform regression and reconstruction. The resulting lifted network
  architecture allows to control the desired amount of Lip\-schitz continuity,
  which is an important feature to obtain adversarially robust regression and
  classification methods. We analyse and numerically demonstrate applications
  for unsupervised and supervised learning.
\end{abstract}

\section{Introduction}
\label{sec:intro}

Deep neural networks (DNNs) are very powerful and expressive tools in machine
learning to solve many classification and regression tasks. Due to their
expressiveness and highly non-linear properties, DNNs are generally very
sensitive to minor perturbations of the input, which makes them unreliable in
e.g.\ safety-critical applications. A quickly growing body of works aims to
obtain robust DNNs either by design or by a dedicated training approach such
as adversarial training. In this work the main goal is to obtain powerful
regression methods that are robust to input perturbations by design. This is
achieved by controlling the Lipschitz continuity of the input-to-output
mapping, which puts limits on the sensitivity of a mapping with respect to
bounded input perturbations (w.r.t.\ the Euclidean norm). Thus, robustness of
the prediction is guaranteed even for unseen test samples and need not to be
established via an expensive verification procedure.

We further step away from pure feed-forward architectures for neural networks,
but base our classification and regression approach on layered energy-based
models, which enable bidirectional feedback between layers when determining
the internal network activations. Feed-forward DNNs can be obtained from such
layered energy models as a limit case, hence energy-based models can be
considered as powerful as regular DNNs. To our knowledge the Lipschitz
properties of such energy-based models have not been considered in the
literature. We propose a simple energy model that guarantees non-expansiveness
of the mapping from input to output activations essentially by symmetrizing an
energy model. Consequently each layer in the underlying energy has a
regression and a reconstruction component. In contrast to existing literature
on Lipschitz continuous DNNs, no difficult-to-enforce constraints on the
weight matrices (such as orthonormality) are needed in our framework.

\section{Related Work}

\paragraph{Lifted DNNs}
Lifted neural networks introduce an explicit set of unknowns for the internal
network activations, and inference is performed by minimization (or
marginalization) w.r.t.\ the network activations. Hence, they are based on a
different computational model than regular feed-forward networks.  Lifted
neural networks can be traced back to two somewhat different origins. First,
they can be seen as instances of more general undirected energy-based models
rooted in (restricted) Boltzmann
machines~\cite{ackley1985learning,smolensky1986,hinton2002training} and a
contrastive learning paradigm, where the learning signal is induced by the
energy difference between fully and partially clamped visible units
(e.g.~\cite{movellan1991contrastive}). It has been demonstrated, that
back-propagation is a limit case of contrastive
learning~\cite{xie2003equivalence,scellier2017equilibrium,zach2019contrastive}
for appropriate layered energy models.

A more recent motivation for lifted networks is the ability of highly parallel
training procedures~\cite{carreira2014distributed}, which proposes a quadratic
relaxation for the feedforward computation in a DNN, and using modern
optimization methods such as ADMM~\cite{taylor2016training}. The ability to
use convex energy models for e.g.~ReLU networks is connected with
re-interpreting the ReLU operation as projection to the non-negative real
line~\cite{zhang2017convergent,askari2018lifted}. The convex energy models
used in our work imitate feedforward networks only in so called weak feedback
setting. \cite{gu2018fenchel,li2018lifted} propose (block-convex but not
jointly convex) lifted network models that aim to replicate the standard
feed-forward pass in a DNN for a general class of activation functions.

\paragraph{DNN Robustness}
The discovery of the intrinsic brittleness of predictions made by deep neural
networks~\cite{szegedy2014intriguing} has led to a significant amount of
research on better constructing adversarial inputs
(e.g.~\cite{goodfellow2014explaining,moosavi2016deepfool,kurakin2016adversarial,carlini2017towards,madry2017towards})
and making neural-network based classifiers more robust with respect to
adversarial perturbations (e.g.\ by using a robust training
loss~\cite{szegedy2014intriguing,madry2017towards,wong2018provable} or
Lipschitz regularization~\cite{cisse2017parseval,yoshida2017spectral,tsuzuku2018lipschitz,qian2018l2}).
Determining adversarial perturbation amounts to minimizing a highly non-linear
and non-convex optimization problem, and therefore an explicit search for
adversarial examples cannot certify robustness of the network. Recently, it
was empirically shown, that adversarial training is insufficient by using
computationally expensive attacks~\cite{qian2018l2,wang2018mixtrain} and
therefore leads to a false sense of robustness.
These results strongly motivate the design of intrinsically robust neural
networks architectures.
Scattering networks~\cite{mallat2012group,bruna2013invariant} are
wavelet-based, non-trainable feature representations combining Lipschitz
continuity with transformation invariance.
Parseval networks~\cite{cisse2017parseval} aim for intrinsic 1-Lipschitz
continuity of a trained DNN by favoring orthonormal weight
matrices. Non-expansive networks~\cite{qian2018l2} propose to utilize
(approximately) distance preserving network layers, and Lipschitz margin
training~\cite{tsuzuku2018lipschitz} estimates the Lipschitz constant during
the training phase and uses it to adjust a required classification margin.

A complementary approach is to verify robustness of a
DNN for a particular input sample via robust optimization techniques.
Networks with general piece-wise linear activation functions can be verified
using linear programming relaxations~\cite{wong2018provable,weng2018towards},
which emerge immediately from exact, but not scalable, mixed integer-linear
programs~\cite{katz2017reluplex,ehlers2017formal}.
Stronger (but computationally demanding) relaxations can be obtained via
semi-definite programming~\cite{raghunathan2018certified,raghunathan2018semidefinite}.

\section{Lifted Regression/Reconstruction Networks (LRRN)}

In this section we propose a lifted network energy that is by construction
Lipschitz continuous with a user-specified Lipschitz constant. In contrast to
lifted networks proposed
in~\cite{zhang2017convergent,gu2018fenchel,li2018lifted,zach2019contrastive}
aiming to mimic the behavior of feed-forward DNNs, we add a reconstructive
term to the network energy model. Thus, we propose to use a network energy of
the form
\begin{align}
  \label{eq:LRRN}
  E(z;x) &= \frac{1}{2} \sum\nolimits_{k=0}^{L-1} \left( \Norm{z_{k+1}- W_kz_k - b_k}^2 + \beta_k \Norm{W_k^T z_{k+1} - z_k - c_k}^2 \right)
\end{align}
subject to $z_0=x$ and $z_k \in \sC_k$ for $k=1,\dotsc,L$. Each
$\sC_k\subseteq \mathbb{R}^{d_k}$, $k=1,\dotsc,L$, is a closed convex set,
which can be used to introduce non-linear behaviour. The choices
$\sC_k=\mathbb{R}^{d_k}$ (linear activation function) and
$\sC_k=\mathbb{R}_{\ge 0}^{d_k}$ (ReLU-like activation function) are of
particular interest.

Obtaining a prediction from the energy models requires computing the
activations $z^*(x) = \arg\min_z E(z;x)$ by solving a (strictly) convex
program. The last layer $z_L$ is considered the output layer, hence the
mapping $x\mapsto z_L^*(x)$ corresponds to the network's prediction function.

The parameters $\beta_k\ge 0$, $k=1,\dotsc,L$, control the Lipschitz constant
of $z_L^*(x)$ as we will see shortly. Setting $\beta_k=0$ for all $k$ yields a
pure forward regression network resembling standard feed-forward
DNNs~\cite{zach2019contrastive}.
We also tie the forward (regression) weights $W_k$ and the reconstructive
weights $W_k^T$, but not the biases $b_k$ and $c_k$ as tying them has no
impact on Lipschitz continuity.

\subsection{Motivation: Lipschitz continuity of linear 1-layer LRRNs}
\label{sec:motivation}

We consider first an LRRN with a single layer and no constraint on the latent
variables. Thus, the energy model in Eq.~\ref{eq:LRRN} reduces to
\begin{align}
  E(z; x) &= \tfrac{1}{2} \Norm{z-Wx-b}^2 + \tfrac{\beta}{2} \Norm{W^Tz - x - c}^2.
\end{align}
The first order optimality condition for $z^*$ for given $x$ is
\begin{align}
  \v 0 &= z - Wx - b + \beta W(W^Tz - x - c) \nonumber
\end{align}
or
\begin{align}
  (\m I + \beta WW^T) z^* &= Wx + b + \beta Wx + \beta Wc = (1+\beta) Wx + b + \beta Wc \nonumber.
\end{align}
Thus, $z^*(x)$ is explicitly given by
\begin{align}
  z^*(x) := (1+\beta) (\mI + \beta WW^T)^{-1} Wx + \underbrace{(\mI + \beta WW^T)^{-1} (b + \beta Wc)}_{=: \tilde c}.
\end{align}
Using the singular value decomposition of $W=U\Sigma V^T$, and therefore
$\mI + \beta WW^T = U (\mI + \beta \Sigma^2 )U^T$, this translates to
\begin{align}
  z^*(x) &= (1+\beta) U (\mI + \beta \Sigma^2 )^{-1} U^T U \Sigma V^T x + \tilde c \nonumber \\
  &= U (1+\beta) (\mI + \beta \Sigma^2 )^{-1} \Sigma V^T x + b =: A_\beta x + \tilde c.
\end{align}
The diagonal matrix $(1+\beta)(\mI + \beta \Sigma^2 )^{-1} \Sigma$ has the
elements $(1+\beta)\sigma_i/(1+\beta\sigma_i^2) \ge 0$ on its main
diagonal. The mapping $f_\beta:\mathbb{R}_{\ge 0} \to \mathbb{R}_{\ge 0}$ with
\begin{align}
  f_\beta(\sigma) &:= \frac{(1+\beta)\sigma}{1+\beta\sigma^2} \nonumber
\end{align}
has a single maximum at $\sigma = \sqrt{1/\beta}$.
Thus,
\begin{align}
  f_\beta(\sqrt{1/\beta}) &= \frac{(1+\beta)\beta^{-1/2}}{1+\beta\beta^{-1}} = \frac{(1+\beta)\beta^{-1/2}}{2}
  = \frac{\beta^{1/2} + \beta^{-1/2}}{2}\nonumber
\end{align}
This means that the singular values of $A_\beta$ are in
$[0,(\beta^{1/2} + \beta^{-1/2})/2]$, and the mapping
$x \mapsto A_\beta x + \tilde c$ is Lipschitz continuous with constant
$(\beta^{1/2} + \beta^{-1/2})/2$ (which is an upper bound on the operator norm
of $A_\beta$). With $\beta=1$ one has $\norm{A_\beta}_2 \le 1$.

\begin{remark}
  \label{rem:explicit_constant}
  The operator norm of $A_\beta$ can be explicitly stated as $\max_i\{
  \frac{(1+\beta)\sigma_i}{1+\beta\sigma_i^2} \}$, where $(\sigma_i)_i$ are
  the singular values of $W$. The maximum is attained for the singular value
  $\sigma_i$ that is ``closest'' (in a certain sense) to $1/\beta$. If
  $\beta\to 0$, then the largest of $\{\sigma_i\}$ yields the operator
  norm. Therefore in this setting the stated Lipschitz constant is explicitly
  dependent on $\beta$ and on the singular values $(\sigma_i)_i$.
\end{remark}

\subsection{Lipschitz continuity of proximal-like operators}

In the previous section the Lipschitz continuity of the mapping $x\mapsto
\arg\min_z \norm{z-Wx-b}^2/2 + \beta\norm{W^Tz-x-b}^2/2 = (1+\beta)(\mI+\beta
W^TW)^{-1}Wx + \tilde c$ was established. In order to add constraints on $z$
(such as non-negativity constraints to obtain a non-linear mapping) and to
analyse deeper LRRNs we need a more general approach. We are intersted in the
Lipschitz properties of the function
\begin{align}
  \label{eq:gen_prox}
  \cP_{\beta,W,G}: x \mapsto \arg\min_z \tfrac{1}{2} \norm{z-Wx}^2 + \tfrac{\beta}{2} \norm{W^T z - x}^2 + G(z),
\end{align}
where $G(z)$ is essentially an arbitrary convex function (not necessarily
differentiable). Since $\norm{z-Wx}^2/2$ is strictly convex in $z$, the
minimizer in the l.h.s.\ of Eq.~\ref{eq:gen_prox} is unique and therefore
$\cP_{\beta,W,G}$ is a proper function. If $W=\mI$, then the above mapping
reduces to
\begin{align}
  x \mapsto \arg\min_z (1+\beta)\norm{z-x}^2 + G(z),
\end{align}
which is known as proximal operator $x\mapsto
\operatorname{prox}_{G/(1+\beta)}(x)$ in the convex optimization
literature. Proximal operators are firmly non-expansive and therefore
1-Lipschitz continuous. This property is extended in a suitable way to
$\cP_{\beta,W,G}$:

\begin{lemma}
  \label{lem:lipschitz}
  Let $G$ be any proper l.s.c.\ convex function, $W\in\mathbb{R}^{n\times m}$
  a matrix with compatible dimensions, and $\beta\ge 0$. Then
  $\cP_{\beta,W,G}$ is $\frac{\beta^{1/2}+\beta^{-1/2}}{2}$-Lipschitz.
\end{lemma}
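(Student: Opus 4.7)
The plan is to reduce the Lipschitz estimate to the monotonicity of the subdifferential $\partial G$. Since $z\mapsto \tfrac{1}{2}\norm{z-Wx}^2 + \tfrac{\beta}{2}\norm{W^Tz - x}^2 + G(z)$ is strictly convex in $z$, the unique minimiser $z^*(x) := \cP_{\beta,W,G}(x)$ is characterised by
\begin{align*}
  \v 0 \in (\mI + \beta WW^T)\,z^*(x) - (1+\beta)Wx + \partial G(z^*(x)).
\end{align*}
For two arguments $x_1,x_2$ I would set $z_i := z^*(x_i)$, $\Delta z := z_1-z_2$, and $\Delta x := x_1-x_2$. Subtracting the two inclusions and pairing with $\Delta z$, the monotonicity of $\partial G$ (i.e.\ $\dotprod{z_1-z_2}{g_1-g_2}\ge 0$ for any $g_i\in\partial G(z_i)$) yields the core inequality
\begin{align*}
  (1+\beta)\dotprod{\Delta z}{W\Delta x} \;\ge\; \norm{\Delta z}^2 + \beta\,\norm{W^T\Delta z}^2.
\end{align*}

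Next I would bound the two sides so as to eliminate $\norm{W^T\Delta z}$. On the left, rewriting $\dotprod{\Delta z}{W\Delta x} = \dotprod{W^T\Delta z}{\Delta x}$ and using Cauchy--Schwarz gives the upper bound $(1+\beta)\norm{W^T\Delta z}\cdot\norm{\Delta x}$; on the right, AM--GM gives $\norm{\Delta z}^2 + \beta\norm{W^T\Delta z}^2 \ge 2\sqrt{\beta}\,\norm{\Delta z}\cdot\norm{W^T\Delta z}$. Assuming $W^T\Delta z \ne \v 0$, dividing through by $\norm{W^T\Delta z}$ yields the advertised estimate $\norm{\Delta z} \le \tfrac{1+\beta}{2\sqrt{\beta}}\norm{\Delta x} = \tfrac{\beta^{1/2}+\beta^{-1/2}}{2}\norm{\Delta x}$. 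The degenerate case $W^T\Delta z = \v 0$ has to be treated separately, but the core inequality then collapses to $0 \ge \norm{\Delta z}^2$, forcing $\Delta z = \v 0$, and the bound holds trivially.

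The hard part is that the right hand side of the core inequality couples both $\norm{\Delta z}$ and $\norm{W^T\Delta z}$: simply dropping $\beta\norm{W^T\Delta z}^2$ produces a constant that scales with the operator norm of $W$, while keeping only $\norm{W^T\Delta z}^2$ leaves no control on $\norm{\Delta z}$ at all. Balancing the two contributions via AM--GM, tight at $\norm{\Delta z} = \sqrt{\beta}\,\norm{W^T\Delta z}$, is exactly what produces a constant independent of the spectrum of $W$ and consistent with the tight value $(\beta^{1/2}+\beta^{-1/2})/2$ obtained from the closed-form linear one-layer analysis in Section~\ref{sec:motivation}.
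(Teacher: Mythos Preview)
Your proof is correct and shares the paper's skeleton: first-order optimality plus monotonicity of $\partial G$ yield exactly the same core inequality
\[
  (1+\beta)\,\dotprod{W^T\Delta z}{\Delta x} \;\ge\; \norm{\Delta z}^2 + \beta\,\norm{W^T\Delta z}^2,
\]
and both arguments then apply Cauchy--Schwarz on the left. The only divergence is in how the final constant is extracted. The paper parametrises the worst-case direction as $u=\alpha W^Tv$, rewrites the inequality as $\norm{v}/\norm{u}\le \sqrt{(1+\beta)\alpha-\beta}/\alpha$, and explicitly maximises this over $\alpha\ge\beta/(1+\beta)$ (the maximiser being $\alpha^*=2\beta/(1+\beta)$). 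You instead apply AM--GM to the right-hand side, $\norm{\Delta z}^2+\beta\norm{W^T\Delta z}^2\ge 2\sqrt{\beta}\,\norm{\Delta z}\,\norm{W^T\Delta z}$, and cancel $\norm{W^T\Delta z}$ directly. Your route is shorter and avoids the auxiliary optimisation in $\alpha$; the paper's route makes the extremal configuration (and hence tightness) more visible. Your separate handling of the degenerate case $W^T\Delta z=\v 0$ is also a small improvement in rigour, since the paper's division by $\alpha\norm{W^Tv}$ tacitly assumes this quantity is nonzero.
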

\begin{proof}
  The optimal $z^*=z^*(x)$ of $\norm{z-Wx}^2/2 + \beta\norm{W^T z - x}^2/2 +
  G(z)$ is determined by the optimality condition
  \begin{align}
    0 &= (\mI + \beta WW^T) z^* - (1+\beta)Wx + g,
  \end{align}
  where $g\in\partial G(z^*)$ is a subgradient of $G$ at $z^*$. Since the
  subgradient is a monotone operator, it satisfies
  \begin{align}
    (g_1 - g_2)^T(z_1 - z_2) \qquad \forall g_i \in \partial G(z_i).
  \end{align}
  Choosing $z_i = z_i^* = z^*(x_i)$ and inserting $g_i = (1+\beta)Wx_i - (\mI
  + \beta WW^T) z_i^*$ yields
  \begin{align}
    \big( (1+\beta)W(x_1-x_2) - (\mI + \beta WW^T)(z_1^*-z_2^*) \big) (z_1^* - z_2^*) \ge 0
  \end{align}
  or
  \begin{align}
    (1+\beta)(x_1-x_2)^T W^T (z_1^*-z_2^*) \ge (z_1^* - z_2^*)^T (\mI + \beta WW^T) (z_1^* - z_2^*).
  \end{align}
  We introduce $u:= x_1-x_2$ and $v:= z_1^* - z_2^*$. Among all $u$ with a
  fixed Euclidean norm $\delta\ge 0$, the vector $u$ leading to the largest
  l.h.s.\ is given by $u = \alpha W^T v$ (using the Cauchy-Schwarz
  inequality), where $\alpha \ge 0$ satisfies
  $\alpha\norm{W^T v} = \delta$. Hence, the above constraint can be restated
  as
  \begin{align}
    (1+\beta) \alpha v^T WW^T v \ge v^T (\mI + \beta WW^T) v & & \text{or} & & \norm{v}^2 \le \big( (1+\beta)\alpha-\beta \big) \norm{W^Tv}^2.
  \end{align}
  This induces the constraint $\alpha\ge \beta/(1+\beta)$ for the solution to
  be feasible. Inserting $u=W^Tv/\alpha$ and rearranging yields
  \begin{align}
    \frac{\norm{v}}{\norm{u}} = \frac{\norm{v}}{\alpha\norm{W^Tv}} &\le \frac{\sqrt{(1+\beta)\alpha-\beta}}{\alpha}
  \end{align}
  for all feasible $\alpha \ge \beta/(1+\beta)$. It is straightforward to
  verify that the mapping
  $f_\beta(\alpha) := \frac{\sqrt{(1+\beta)\alpha-\beta}}{\alpha}$ with domain
  $[\beta/(1+\beta),\infty)$ has range
  $[0,(\beta^{1/2}+\beta^{-1/2})/2]$, where the maximum is attained at
  $\alpha^*=2\beta/(1+\beta)$. Hence,
  \begin{align}
    \frac{\norm{z_1^*-z_2^*}}{\norm{x_1-x_2}} \le \frac{\beta^{1/2}+\beta^{-1/2}}{2} & & \text{i.e.}
    & & \norm{z_1^*-z_2^*} \le \frac{\beta^{1/2}+\beta^{-1/2}}{2} \norm{x_1-x_2},
  \end{align}
  which completes the proof.
\end{proof}
Consistent with Section~\ref{sec:motivation} the smallest Lipschitz constant
is obtained by setting $\beta=1$, which yields the following corollary:
\begin{corollary}
  $\cP_{1,W,G}$ is 1-Lipschitz.
\end{corollary}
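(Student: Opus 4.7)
The plan is to derive the corollary as an immediate specialization of Lemma~\ref{lem:lipschitz}. I would substitute $\beta=1$ directly into the Lipschitz bound $(\beta^{1/2}+\beta^{-1/2})/2$ obtained there, which collapses to $(1+1)/2=1$. The lemma's hypotheses on $G$ (proper, l.s.c., convex) and on $W$ (arbitrary real matrix of compatible dimensions) are unchanged by this substitution, so no additional regularity assumptions need to be verified and the conclusion that $\cP_{1,W,G}$ is 1-Lipschitz follows in one line.

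I would also include a short motivating remark explaining why $\beta=1$ is the natural value to distinguish: among all $\beta>0$, the function $\beta\mapsto(\beta^{1/2}+\beta^{-1/2})/2$ attains its global minimum at $\beta=1$, by AM--GM applied to $\beta^{1/2}$ and $\beta^{-1/2}$ (equality holds iff the two terms coincide, i.e.\ $\beta=1$). Hence $\beta=1$ yields the tightest non-expansiveness guarantee that Lemma~\ref{lem:lipschitz} can provide, which is consistent with the earlier observation in Section~\ref{sec:motivation} that $\norm{A_\beta}_2\le 1$ precisely at $\beta=1$ in the linear single-layer case.

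The main obstacle: there is none. The corollary is a direct one-line consequence of the preceding lemma, and the only verification required is the arithmetic identity $(1+1)/2=1$. No new technical work, no additional convex-analytic tools, and no extra assumptions on $W$ or $G$ are needed.
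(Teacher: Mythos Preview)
Your proposal is correct and matches the paper's approach exactly: the corollary is stated without a separate proof and is introduced as the immediate specialization of Lemma~\ref{lem:lipschitz} at $\beta=1$, with the preceding sentence noting that this choice yields the smallest Lipschitz constant. Your AM--GM remark is a fine justification of that minimality claim.
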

Unlike in Section~\ref{sec:motivation} (cf.\
Remark~\ref{rem:explicit_constant}) the provided Lipschitz constant only
depends on $\beta$, and for $\beta\to 0$ the Lemma above yields a vacuous
bound. Nevertheless, one has the following simple lemma:
\begin{lemma}
  \label{lem:beta0}
  $\cP_{0,W,G}$ is $\norm{W}_2$-Lipschitz.
\end{lemma}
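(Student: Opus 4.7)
The plan is very short: observe that when $\beta=0$, the second quadratic term in the definition of $\cP_{\beta,W,G}$ simply disappears, and the minimization reduces to
\begin{align*}
\cP_{0,W,G}(x) = \arg\min_z \tfrac{1}{2}\norm{z-Wx}^2 + G(z) = \operatorname{prox}_G(Wx),
\end{align*}
i.e.\ the classical proximal operator of $G$ evaluated at the linearly preprocessed input $Wx$. This recognition is really the only step with content; the rest is standard.

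From here I would invoke the two well-known building blocks separately. First, for any proper l.s.c.\ convex $G$ the proximal operator $\operatorname{prox}_G$ is firmly non-expansive, and in particular $1$-Lipschitz on $\mathbb{R}^n$. Second, the linear map $x\mapsto Wx$ has Lipschitz constant exactly $\norm{W}_2$ by definition of the spectral (operator) norm. Writing $\cP_{0,W,G} = \operatorname{prox}_G \circ (W\,\cdot\,)$ and combining Lipschitz constants along the composition gives
\begin{align*}
\norm{\cP_{0,W,G}(x_1) - \cP_{0,W,G}(x_2)} \le \norm{Wx_1 - Wx_2} \le \norm{W}_2\,\norm{x_1-x_2},
\end{align*}
which is the claim.

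There is essentially no obstacle here; the whole point of stating this as a separate lemma is to cover the $\beta\to 0$ corner case where the bound $(\beta^{1/2}+\beta^{-1/2})/2$ of Lemma~\ref{lem:lipschitz} blows up and becomes vacuous. If one wanted to derive it uniformly from the same monotonicity argument used in Lemma~\ref{lem:lipschitz}, one would set $\beta=0$ in the optimality condition $0 = z^* - Wx + g$, form the difference for two inputs, test against $v = z_1^*-z_2^*$, apply monotonicity of $\partial G$ to get $\norm{v}^2 \le (x_1-x_2)^T W^T v$, and conclude via Cauchy--Schwarz that $\norm{v}\le \norm{W^T(x_1-x_2)}\le \norm{W}_2\norm{x_1-x_2}$; but the composition-with-prox viewpoint above is cleaner.
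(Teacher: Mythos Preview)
Your proof is correct and matches the paper's argument essentially verbatim: recognize $\cP_{0,W,G}(x)=\operatorname{prox}_G(Wx)$, then compose the $1$-Lipschitz proximal map with the $\norm{W}_2$-Lipschitz linear map. The additional monotonicity-based derivation you sketch is a nice aside (modulo a harmless $W$ vs.\ $W^T$ slip), but the paper uses only the composition viewpoint.
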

\begin{proof}
  We have
  \begin{align}
    \cP_{0,W,G}(x) = \arg\min_z \tfrac{1}{2} \norm{z-Wx}^2 + G(z) = \operatorname{prox}_G(Wx)
    = \left( \operatorname{prox}_G \circ (W\cdot) \right) (x),
  \end{align}
  i.e.\ $\cP_{0,W,G}(x)$ is the composition of a linear mapping with a
  proximal step. Since the Lipschitz constant of the mapping $x\mapsto Wx$ is
  $\norm{W}_2$ and the proximal operator is 1-Lipschitz, we deduce that the
  Lipschitz constant of $\cP_{0,W,G}$ is at most $\norm{W}_2$.
\end{proof}
In practice we are mostly interested in the choice of $\beta=1$ (both
regression and full reconstructive terms) and $\beta=0$ (pure regression term).

\subsection{General LRRNs}

Using Lemma~\ref{lem:lipschitz} the analysis of the energy underlying the
lifted regression/reconstruction networks (Eq.~\ref{eq:LRRN}) is relatively
straightforward. We define for $k=1,\dotsc,L$
\begin{align}
  \rho_k :=
  \begin{cases}
    \frac{\beta_k^{1/2}+\beta_k^{-1/2}}{2} & \text{if } \beta_k > 0 \\
    \norm{W_k}_2 & \text{if } \beta_k = 0.
  \end{cases}
\end{align}

\begin{theorem}
  For the layered energy model given in Eq.~\ref{eq:LRRN} let $z^*(x) =
  \arg\min_{z} E(z, x)$ be the minimizer for given input $x$. Then $x\mapsto
  z_L^*(x)$ (i.e.\ the mapping from the input to the last layer latent
  variables) is Lipschitz continuous with constant $\prod_{k=1}^L \rho_k$.
\end{theorem}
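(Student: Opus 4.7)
The plan is to reduce the joint minimization to a sequence of single-layer proximal-like problems and invoke Lemma~\ref{lem:lipschitz} (or Lemma~\ref{lem:beta0}) layer by layer. Concretely, I would induct on $\ell \in \{1,\dotsc,L\}$ and prove that $x \mapsto z_\ell^*(x)$ is Lipschitz with constant $\prod_{k=1}^\ell \rho_k$; the case $\ell = L$ is the theorem. The inductive step reduces to establishing a $\rho_\ell$-Lipschitz bound on the transition $z_{\ell-1}^*(x) \mapsto z_\ell^*(x)$, after which the bounds compose multiplicatively.

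The obstacle is that each internal variable $z_\ell$ is coupled to both $z_{\ell-1}$ (via block $\ell-1$) and $z_{\ell+1}$ (via block $\ell$), so $z_\ell^*$ is not manifestly of the form $\cP_{\beta,W,G}(z_{\ell-1}^*)$. I would eliminate the downstream variables by partial minimization, defining
\begin{align*}
  \Phi_\ell(z_\ell) := \min_{z_{\ell+1}\in\sC_{\ell+1},\dotsc,z_L\in\sC_L} \tfrac{1}{2}\sum_{k=\ell}^{L-1}\left(\norm{z_{k+1}-W_k z_k-b_k}^2 + \beta_k\norm{W_k^T z_{k+1}-z_k-c_k}^2\right),
\end{align*}
which is proper, lower semicontinuous and convex as the partial infimum of a jointly convex proper l.s.c.\ function that is strictly convex and coercive in the minimized variables. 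The standard nested minimization identity then shows that at the joint optimum $z_\ell^*(x)$ coincides with the solution of
\begin{align*}
  \min_{z_\ell \in \sC_\ell} \tfrac{1}{2}\norm{z_\ell - W_{\ell-1}z_{\ell-1}^* - b_{\ell-1}}^2 + \tfrac{\beta_{\ell-1}}{2}\norm{W_{\ell-1}^T z_\ell - z_{\ell-1}^* - c_{\ell-1}}^2 + \Phi_\ell(z_\ell).
\end{align*}
After translating the input by $c_{\ell-1}$ and absorbing the remaining bias-induced linear functional of $z_\ell$ into a regularizer $G_\ell$ built from $\iota_{\sC_\ell}$, $\Phi_\ell$ and a linear term, this is precisely $\cP_{\beta_{\ell-1},W_{\ell-1},G_\ell}$ evaluated at an affine function of $z_{\ell-1}^*$. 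Lemma~\ref{lem:lipschitz} (resp.\ Lemma~\ref{lem:beta0} when $\beta_{\ell-1}=0$) then supplies the $\rho_\ell$-Lipschitz bound, and composing across $\ell$ gives $\norm{z_L^*(x_1)-z_L^*(x_2)} \le \prod_{k=1}^L \rho_k\,\norm{x_1-x_2}$.

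The main technical point is the convex-analytic regularity of $\Phi_\ell$ --- properness, lower semicontinuity and convexity --- which follows from standard partial-minimization results once one observes that each quadratic block is strictly convex and coercive in its trailing variable $z_{k+1}$ and each $\sC_k$ is closed and convex. Once that is in hand, the bias bookkeeping is inert (constant shifts cancel in differences $z_\ell^*(x_1)-z_\ell^*(x_2)$, exactly as in the proof of Lemma~\ref{lem:lipschitz}, and the added linear-in-$z$ terms do not affect monotonicity of $\partial G_\ell$), and the induction proceeds routinely.
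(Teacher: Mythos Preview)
Your proposal is correct and follows essentially the same route as the paper: partially minimize out the trailing layers to obtain a convex regularizer, recognize the resulting single-layer problem as an instance of $\cP_{\beta_{\ell-1},W_{\ell-1},G_\ell}$ evaluated at $z_{\ell-1}^*$, apply Lemma~\ref{lem:lipschitz} (or Lemma~\ref{lem:beta0}), and compose layer by layer. You are somewhat more explicit than the paper about the regularity of $\Phi_\ell$ and the bias bookkeeping, but the argument is the same.
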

\begin{proof}
  Let $z_1^*(x)$ be given by
  \begin{align}
    z_1^*(x) &= \arg\min_{z_1} \min_{z_2,\dotsc,z_L} E((z_1,\dotsc,z_L), x) \nonumber \\
    &= \arg\min_{z_1\in\sC_1} \min_{z_2\in\sC_2,\dotsc,z_L\in\sC_L}
    \tfrac{1}{2} \sum \Norm{z_{k+1}- W_kz_k - b_k}^2 + \tfrac{\beta_k}{2} \sum \Norm{z_k - W_k^T z_{k+1} - c_k}^2.
  \end{align}
  Since minimizing out variables in a jointly convex function yields a convex
  function in the remaining unknowns, we can write the above as
  \begin{align}
    z_1^*(x) &= \arg\min_{z_1\in\sC_1} \tfrac{1}{2} \Norm{z_1- W_0x}^2 + \tfrac{\beta_1}{2} \Norm{W_0^T z_1 - x}^2 + G_1(z_1)
    = P_{\beta_1,W_0,G_1}(x).
  \end{align}
  Hence, $z_1^*(x)$ is $\rho_1$-Lipschitz due to Lemmas~\ref{lem:lipschitz}
  and~\ref{lem:beta0}. Due to the layered structure $z_2^*$ only depends on
  $z_1^*=z_1^*(x)$, therefore
  \begin{align}
    z_2^*(z_1^*) &= \arg\min_{z_2\in\sC_2} \tfrac{1}{2} \Norm{z_2- W_1z_1^*}^2 + \tfrac{\beta_2}{2} \Norm{W_1^T z_2 - z_1^*}^2 + G_2(z_2)
    = P_{\beta_2,W_1,G_2}(z_1^*)
  \end{align}
  for a suitable convex function $G_2$. Hence, $z_2^*(z_1^*)$ is
  $\rho_2$-Lipschitz. Applying this argument iteratively on the remaining
  layers, we find that $z_k^*(z_{k-1}^*)$ is $\rho_k$-Lipschitz. Further,
  $z_L^*(x) = \big( z_L^* \circ z_{L-1}^* \circ \cdots \circ z_1^*\big)(x)$,
  the Lipschitz constant of $z_L^*(x)$ is at most $\prod_{k=1}^L \rho_k$.
\end{proof}

\begin{corollary}
  Let $\beta_1=\cdots=\beta_{L-1}=1$ and $\beta_L=0$ (i.e.\ the output layer
  is a pure regression layer). Then the Lipschitz constant of $x\mapsto
  z_L^*(x)$ is at most $\norm{W_{L-1}}_2$.
\end{corollary}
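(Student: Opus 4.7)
The plan is to simply invoke the preceding theorem with the specific $\beta_k$ values and watch the product telescope. First I would recall that the theorem establishes a bound of $\prod_{k=1}^L \rho_k$ on the Lipschitz constant of $x\mapsto z_L^*(x)$, where $\rho_k$ is defined piecewise depending on whether $\beta_k$ is zero or positive. So the proof reduces entirely to evaluating this product for the prescribed choice of parameters.

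Next, for each $k=1,\dotsc,L-1$ I would plug $\beta_k=1$ into the formula $\rho_k=(\beta_k^{1/2}+\beta_k^{-1/2})/2$, obtaining $\rho_k=(1+1)/2=1$. For the last layer, since $\beta_L=0$, I would instead use the second branch of the definition, which by Lemma~\ref{lem:beta0} gives $\rho_L=\norm{W_{L-1}}_2$ (the weight matrix associated with the $L$-th layer of the energy in Eq.~\ref{eq:LRRN}, i.e.\ the one entering $P_{\beta_L,W_{L-1},G_L}$ in the theorem's proof).

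Finally, multiplying the factors yields $\prod_{k=1}^L \rho_k = 1\cdot 1\cdots 1 \cdot \norm{W_{L-1}}_2 = \norm{W_{L-1}}_2$, which is the claimed bound. There is no real obstacle: the only subtlety worth mentioning is the bookkeeping between layer indices and weight indices (layer $k$ in the theorem uses weight $W_{k-1}$ from the energy), but this was already settled inside the theorem's proof, so the corollary follows as an immediate specialization.
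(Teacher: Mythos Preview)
Your proposal is correct and is exactly the intended argument: the paper states this corollary without proof, since it follows immediately from the theorem by substituting $\beta_1=\cdots=\beta_{L-1}=1$ (giving $\rho_k=1$) and $\beta_L=0$ (giving $\rho_L=\norm{W_{L-1}}_2$). Your remark about the index bookkeeping is apt---the definition of $\rho_k$ in the paper writes $\norm{W_k}_2$, but from the theorem's proof one sees that layer $k$ involves $W_{k-1}$, so $\rho_L=\norm{W_{L-1}}_2$ is indeed the right quantity.
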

Networks with such a choice for $(\beta_k)_{k=1}^L$ have a clear
interpretation: the first $L-1$ layers extract non-expansive feature
representations, and the last layer is an arbitrary linear regression layer to
generate the target output. Hence, the Lipschitz properties of the network
(and therefore the robustness with respect to input perturbations) can be
assessed by inspecting the last layer matrix $W_{L-1}$.

\begin{remark}
  The network energy in Eq.~\ref{eq:LRRN} allows direct feedback from a
  subsequent layer to the previous one (and therefore indirect feedback to all
  earlier layers). This feedback from later layers can be essentially
  suppressed by using discounted terms~\cite{xie2003equivalence,zhang2017convergent},
  \begin{align}
    \label{eq:discounted_LRRN}
    E(z;x) &= \frac{1}{2} \sum\nolimits_{k=0}^{L-1} \gamma^{k-1} \left( \Norm{z_{k+1}- W_kz_k - b_k}^2 + \beta_k \Norm{W_k^T z_{k+1} - z_k - c_k}^2 \right)
  \end{align}
  for a \emph{feedback parameter} $\gamma>0$. With $\gamma\to 0$ one recovers
  a feed-forward DNN, and the contrastive learning approach for supervised
  training (see Section~\ref{sec:supervised}) is equivalent to
  back-propagation. In that sense energy-based models such as
  Eqs.~\ref{eq:LRRN} and~\ref{eq:discounted_LRRN} are more general than
  feed-forward networks. Observe that Eq.~\ref{eq:LRRN} and
  Eq.~\ref{eq:discounted_LRRN} are actually equivalent, since the feedback
  weight $\gamma^{k-1}$ can be absorbed by reparametrizing the weights
  $W_k$, biases $b_k$ and $c_k$, and the activations $z_k$. Nevertheless, the
  feedback parameter still influences initialization of the network parameters
  and any weight regularization term.
\end{remark}

\paragraph{Implementing LRRNs}
Determining $z^*(x)$ requires minimizing a strictly (even strongly) convex,
possibly constrained, optimization problem Eq.~\ref{eq:LRRN}. The easiest
method to solve such a task is coordinate descent, which minimizes
sequentially the scalar network activations $\{z_{kj}\}$ (with $k$ iterating
over all layers and $j$ traversing units in the current layer). For many
relevant constraint sets $\sC_k$ the optimal solution for $z_{kj}$ after
fixing all other activations can be stated in closed form. Hence, we employ
coordinate descent in our implementation.

\section{Learning with LRRNs}

In this section we briefly discuss the application of LRRNs for unsupervised
and supervised learning. In the unsupervised setting LRRNs generalize subspace
learning, and supervised learning requires a non-standard approach since
back-propagation is not directly applicable for energy-based network models.
We show results for the MNIST~\cite{lecun1998gradient}, Fashion-MNIST
(FMNIST,~\cite{xiao2017fashion_mnist}) and
Kuzushiji-MNIST~(KMNIST,~\cite{clanuwat2018deep}) datasets. Training of the
models is performed by stochastic gradient descent, where the activations
$z^*$ are first inferred using coordinate descent, and the contribution of a
training sample to the gradient is based on these activations, e.g.\
$\nabla_{W_k} E(z^*; x) = (W_kz_k^*-z_{k-1}^*+b_k) (z_k^*)^T + \beta_k
z_{k+1}^* (W_k z_{k+1}^* - z_k^* -c_k)^T$.

\subsection{Unsupervised setting}

Let $\{x_i\}_{i=1}^N$ be a set of unlabelled training samples. One question is whether
a sensible energy model can be obtained by solely minimizing the average
energy of the training samples, i.e.
\begin{align}
  \label{eq:unsupervised}
  \min_\theta J(\theta) &= \min_\theta \tfrac{1}{N} \sum\nolimits_i \min_z E(z; x_i) = \min_\theta \tfrac{1}{N} \sum\nolimits_i E(z^*(x_i); x_i),
\end{align}
where $\theta$ contains all the weights and biases in the energy model
(Eq.~\ref{eq:LRRN}). Since in this setting we are not interested in the output
layer, $E(z; x)$ reduces to
\begin{align}
  \label{eq:unsupervised1}
  E(z_1;x) = \tfrac{1}{2} \Norm{z_1-W_0x}^2 + \tfrac{1}{2} \Norm{W_0^Tz_1-x}^2 + G_1(z_1),
\end{align}
where $G_1$ is a convex function obtained by minimizing out all subsequent
layers $z_2,\dotsc,z_L$, and $G_1$ acts therefore as a (learnable) prior on
$z_1$. We also absorbed the bias terms into $G_1$.

Since $E(z;x)$ induces an \emph{unnormalized} energy model $E(x) = \min_z
E(z;x)$, it is not immediately clear that the loss in
Eq.~\ref{eq:unsupervised} (which can also be seen as maximizing an
unnormalized probability) leads to any desired energy model. It can be shown
that if $G_1$ is coercive (i.e.\ $G_1(z_1) \to\infty$ as $\norm{z_1}\to
\infty$), then $E(x)$ is also coercive, and therefore samples with small
energy (thus highly likely ones) are concentrated to bounded (convex)
sets. Hence, proper shaping of $G_1$ is somewhat analogous to the ``bottleneck'' in
standard auto-encoders.

Note that Eq.~\ref{eq:unsupervised1} resembles an auto-encoder with a
single hidden layer: the first term defines the encoder, the second term is a
reconstruction error and therefore corresponds to the decoder, and the last
term represents the prior on the latent variables.
If $G_1(z_1)\equiv 0$ (which means that the underlying LRRN has exactly
one linear layer), it can be shown that Eq.~\ref{eq:unsupervised} essentially
performs an eigen-decomposition of the scatter matrix $\sum_i x_ix_i^T$, and
is therefore strongly connected to PCA and subspace learning. If all training
points $\{x_i\}$ lie in an $r$-dimensional subspace, and $\dim(z_1)=r$,
then $W_0^T$ is an orthonormal matrix satisfying $W_0W_0^T = \mI$, and
$E(z_1^*(x); x)=0$ for all points lying in that subspace.

Fig.~\ref{fig:Linear_32hidden} depicts the filter obtained from such
unsupervised training on the MNIST dataset. Using linear activations leads to
PCA-like modes for the filter (Fig.~\ref{fig:Linear_32hidden}(a)), whereas
ReLU-like non-negative activations yield filters that resemble dictionary
elements learned via sparse coding (Fig.~\ref{fig:Linear_32hidden}(b)). Unlike
PCA, the filters in Fig.~\ref{fig:Linear_32hidden}(a) describe only a subspace
and are therefore not necessarily aligned with the PCA basis. Further visual
results are shown in the appendix.

\begin{figure}[htb]
  \centering
  \subfigure[Linear]{\includegraphics[width=0.47\textwidth]{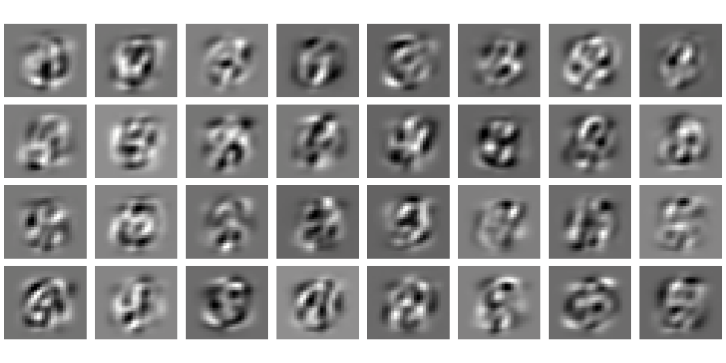}}~~~~~~
  \subfigure[ReLU-type]{\includegraphics[width=0.47\textwidth]{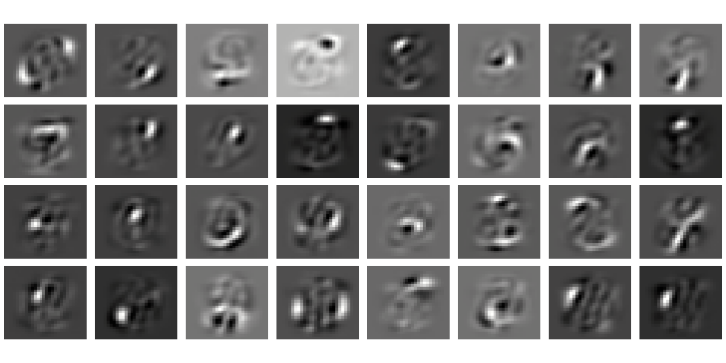}}
  \caption{First layer filters of unsupervised 784-32-32 LRRNs using different activations.}
  \label{fig:Linear_32hidden}
\end{figure}

Table~\ref{tab:E_unsup} demonstrates the ability of a trained energy-based
model to distinguish between different datasets. Test data from the same
dataset has on average consistently smaller energies (along the main diagonal)
than samples from different datasets. Further, horizontally flipped (mirrored)
test data from the same dataset has on average also a higher energy than the
original test data. Finally, samples from a Gaussian fitted to the training
set with diagonal covariance matrix have significantly higher energy
values. Note that the characters in~KMNIST have generally larger variablity
than e.g.\ MNIST, which explains the higher overall energies for this
dataset. In summary, unsupervised learning of LRRNs allows to train
\emph{unnormalized} energy models capturing the training distribution
\emph{even without} explicitly addressing the lack of normalization (as
opposed to contrastive divergence~\cite{hinton2002training}, noise-contrastive
estimation~\cite{gutmann2012noise} or score
matching~\cite{hyvarinen2005estimation}).

\begin{table}[htb]
  \centering
  \begin{tabular}{c|l|c|c|c|c|c}
    &                & \multicolumn{5}{c}{Test data}                                  \\ \hline
    & $\min_zE(z;x)$ & MNIST   & KMNIST  & FMNIST &  Mirrored & Fitted Gaussian \\ \hline
    \parbox[t]{2mm}{\multirow{3}{*}{\rotatebox[origin=c]{90}{Training}}}
    & MNIST  & 13.7$\pm$5.1  & 26.4$\pm$6.0  & 35.9$\pm$8.3 & 21.5$\pm$6.8~ & 48.0$\pm$3.8 \\ \cline{2-7} 
    & KMNIST & 59.5$\pm$24.1 & 36.5$\pm$12.9 & 52.1$\pm$16.0 & 43.8$\pm$15.6 & 78.9$\pm$4.5 \\ \cline{2-7} 
    & FMNIST & 55.4$\pm$30.8 & 26.8$\pm$11.5 & 12.6$\pm$6.7~ & 17.4$\pm$11.0 & 65.0$\pm$3.9
  \end{tabular}
  \caption{Avg.~energies (and std.~deviations) of unsupervised 784-32-32 ReLU-models trained
    on MNIST, KMNIST and FMNIST (rows), evaluated on different test sets (rows).
  }
  \label{tab:E_unsup}
\end{table}

\begin{remark}
  Since $E(z_1;x)$ is jointly convex in $x$ and $z_1$, $E(z_1^*(x);x)$ reduces
  to a convex function of $x$. Due to the connection of $\cP_{1,W_0,G_1}$ with
  proximal operators (which are themselves generalizations of projection steps
  to convex sets), the mapping $x\mapsto E(z_1^*(x);x)$ can be interpreted as
  generalization of the squared distance to a convex set (whose exact shape is
  learned from data). Hence, $E(z_1^*(x);x)$ cannot directly represent e.g.\
  non-convex manifolds.
  In order to increase the expressive power of LRRNs in the unsupervised
  setting, one can e.g.\ train class-specific energy models with weights
  shared across classes,
  %
  \begin{align}
    \label{eq:class_specific}
    \tfrac{1}{N} \sum\nolimits_i \min_{z:z_L=y_i} E(z; x_i) \to \min\nolimits_\theta,
  \end{align}
  where $y_i$ is the label associated with $x_i$.
\end{remark}

\subsection{Supervised learning}
\label{sec:supervised}

Let $\{(x_i,y_i)\}_{i=1}^N$ be labelled training data, and the aim is to
estimate network parameters such that $z_L^*(x_i) \approx y_i$, where
$z_L^*(x)$ is obtained by minimizing the energy model in Eq.~\ref{eq:LRRN}.
Since the mapping $x_i \mapsto z_L^*(x_i)$ has generally no closed form
expression in terms of the model parameters
$\theta=(W_k,c_k,b_k)_{k=0}^{L-1}$, we employ a contrastive learning
approach~\cite{xie2003equivalence,scellier2017equilibrium,zach2019contrastive}
in the supervised setting. Let us denote the free and the so called clamped
solution by $z^*(x)$ and $\hat z(x,y)$, respectively,
\begin{align}
  z^*(x) = \arg\min\nolimits_z E(z; x) & & \hat z(x,y) = \arg\min\nolimits_{z:z_L=y} E(z; x).
\end{align}
Thus, the clamped solution is obtained by minimizing the network energy with
the additional constraint of fixing the output layer. By construction
$E(z^*(x);x) \le E(\hat z(x,y);x)$, and the aim of contrastive learning is to
close the gap between these two energies by adjusting the model parameters
$\theta$:
\begin{align}
  \label{eq:contrastive}
  \ell(\theta) = \tfrac{1}{N} \sum\nolimits_i \big( E(\hat z(x_i,y_i); x_i) - E(z^*(x_i); x_i) \big) \to \min\nolimits_\theta.
\end{align}
This loss can be interpreted as an approximation of the cross-entropy
loss~\cite{zach2019contrastive}. Since $E$ is strongly convex,
$E(\hat z(x,y); x) \approx E(z^*(x); x)$ implies that
$z_L^*(x)\approx y$. We apply weight decay regularization on the last layer
matrix $W_{L-1}$ to favor non-contracting, distance-preserving feature
representations in the first $L-1$ layers.

Fig.~\ref{fig:supervised_results} illustrates the evolution of the training
loss and test accuracies w.r.t.\ the number of epochs for the MNIST,
FMNIST and KMNIST datasets.
By inspecting the spectral norm of the last layer weight matrix, the mappings
$x\!\mapsto\!z_L^*(x)$ have Lipschitz constants of at most 0.94, 0.95, and 1.07,
respectively, for MNIST, KMNIST and FMNIST trained models. It can be easily
derived (e.g.~\cite{tsuzuku2018lipschitz}), that the classifier output is
unaffected by any input perturbation $\Delta x$ with
$\norm{\Delta x}_2 \le m_x/(\sqrt{2}\rho)$, where $\rho$ is a Lipschitz
constant of $x\!\mapsto\! z_L^*(x)$ and
$m_x = z_{L,j(x)}^*(x) - \max_{j\ne j(x)} z_{L,j}^*(x)$ is the margin for the
predicted label $j(x) = \arg\max_j z_{L,j}^*(x)$. Given the median margins for
the corresponding test data, this translates to median norms of 0.70 (MNIST),
0.51 (KMNIST), and 0.46 (FMNIST), for provably safe perturbations.
For comparison, \cite{tsuzuku2018lipschitz} reports a value of~1.02 for an
MNIST-trained small-scale CNN (but does not state its test accuracy).
We also explore the impact of unsupervised pretraining on supervised learning
in the appendix (yielding slightly higher accuracies).

\begin{figure}[htb]
  \centering
  \includegraphics[width=0.49\textwidth]{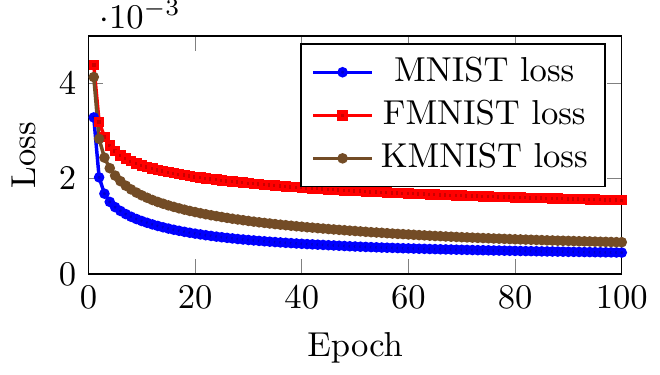}
  \includegraphics[width=0.49\textwidth]{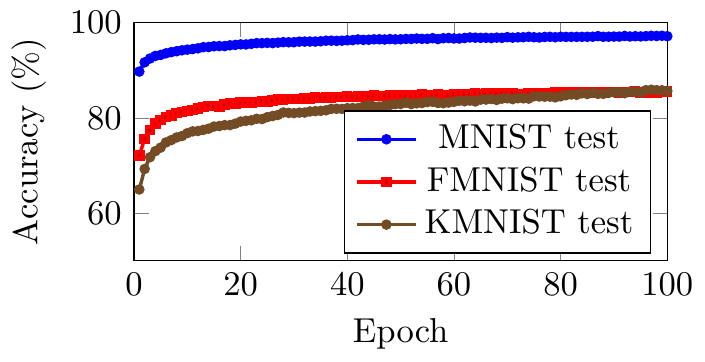}
  \caption{Loss and classification accuracies of 784-64-64-10 ReLU-type
    LRRNs. A final accuracy of 97.2\% is achieved for MNIST, 85.6\% for FMNIST
    and 85.7\% for KMNIST.}
  \label{fig:supervised_results}
\end{figure}

\section{Conclusion}

We propose lifted regression/reconstruction networks (LRRNs), that guarantee
controlled Lipschitz continuity with easy-to-evaluate constants in layered
energy-based models. This is achieved by essentially symmetrizing the terms in
the underlying energy model, and therefore explicit penalizers on the model
parameters (e.g.\ weight matrices) are not required to achieve a target
Lipschitz property. We demonstrate how LRRNs can be used for both supervised
and unsupervised learning.
Future work includes exploration of the semi-supervised setting by combining
the discriminative (contrastive) loss with unsupervised training. One goal is
to obtain a unified DNN architecture for regression (and classification) that
is further able to detect out-of-distribution samples (in the spirit
of~\cite{grathwohl2019your}), but at the same time explicitly allow regression
tasks, aim for robustness by design, and target a unified training method.

\small
\bibliographystyle{plain}
\bibliography{literature}

\appendix

\section{Unsupervised learning: additional visual results}
All networks were trained for 100 epochs with a learning rate
$\eta=0.005$.  Fig.~\ref{fig:unsup} shows filters obtained by unsupervised
training of a 784-32-32 network with linear activation
($\mathcal C_k = \mathbb{R}^{d_k}$), ReLU-like activation
($\mathcal C_k = \mathbb{R}_{\ge 0}^{d_k}$) and hard-sigmoid activation
function ($\mathcal C_k = [-1,1]^{d_k}$). Fig.~\ref{fig:unsup_ReLU} illustrates
the first layer filters for MNIST, KMNIST and FMNIST using the ReLU-type
activation function.

The impact of the chosen activation function and dataset on the visual
properties of the filters is evident. More constrained network activations
leads to sparser filters, and the visual appearance of each dataset is also
reflected in the filters.

\begin{figure*}[htb]
    \centering
    \subfigure[Linear activations]{\includegraphics[width=0.32\textwidth]{MNIST_unsup_Linear_filters.png}}~~
    \subfigure[ReLU activations]{\includegraphics[width=0.32\textwidth]{MNIST_unsup_ReLU_filters.png}}~~
    \subfigure[Hard sigmoid activations]{\includegraphics[width=0.32\textwidth]{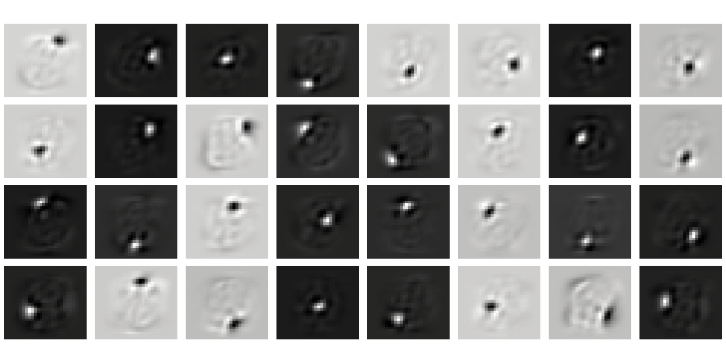}}
    \caption{Filters of 784-32-32 unsupervised networks trained on MNIST using different activation functions .}
    \label{fig:unsup}
\end{figure*}

\begin{figure*}[htb]
    \centering
    \subfigure[MNIST]{\includegraphics[width=0.32\textwidth]{MNIST_unsup_ReLU_filters.png}}~~
    \subfigure[KMNIST]{\includegraphics[width=0.32\textwidth]{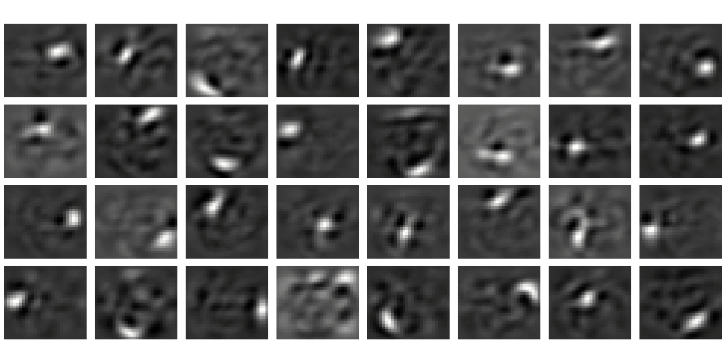}}~~
    \subfigure[FMNIST]{\includegraphics[width=0.32\textwidth]{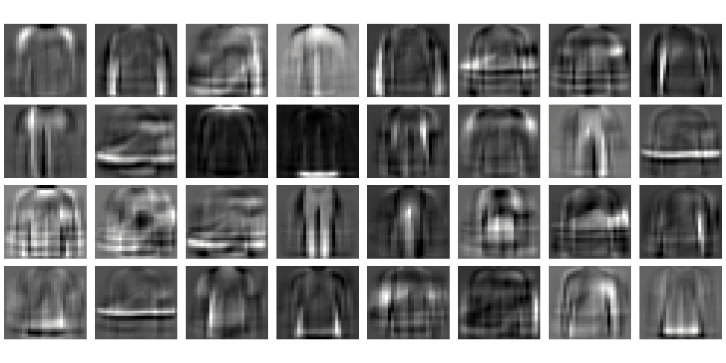}}
    \caption{Filters of 784-32-32 ReLU networks trained unsupervised on MNIST, KMNIST and FMNIST.}
    \label{fig:unsup_ReLU}
\end{figure*}

\section{Supervised Learning}

\subsection{Supervised learning from random initialization}

A 784-64-64-10 ReLU-type LRRN network was trained for 100 epochs, using 20 BCD
passes when inferring activations. A learning rate $\eta=0.4$ was used for MNIST
and KMNIST and $\eta=0.1$ was used for FMNIST. The reconstruction prefactors
were chosen as $\beta=[1, 1, 0]$ and the final layer was linear. The feedback
parameter was chosen as $\gamma=1/8$, and a batch size of 10 was used.

\begin{figure}[htb]
    \centering
    \subfigure[MNIST filters]{\includegraphics[width=0.32\textwidth]{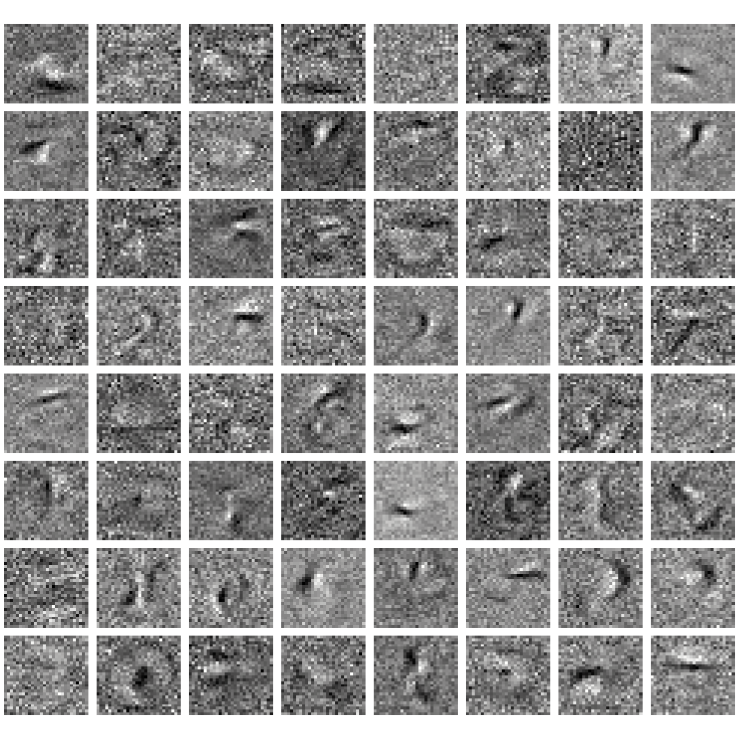}}~~
    \subfigure[KMNIST filters]{\includegraphics[width=0.32\textwidth]{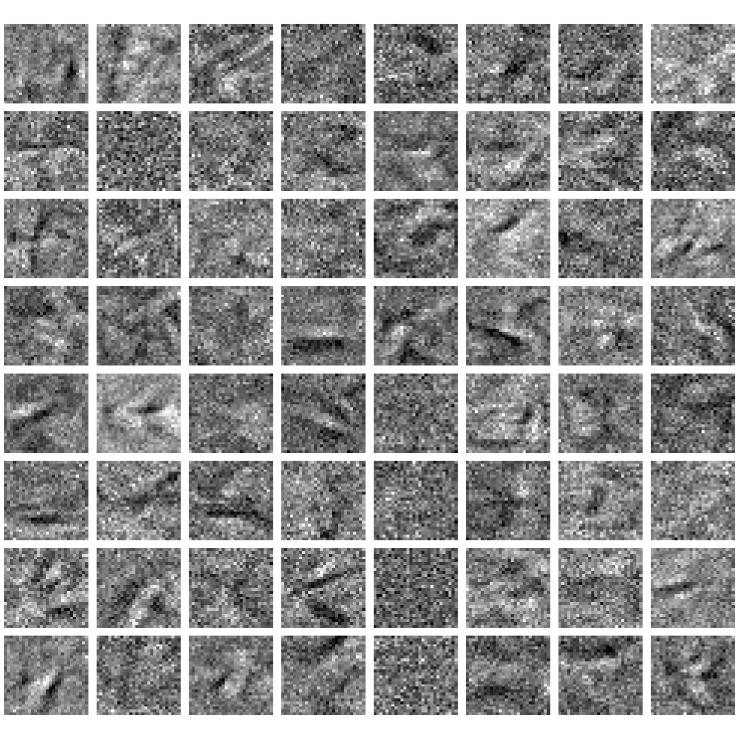}}~~
    \subfigure[FMNIST filters]{\includegraphics[width=0.32\textwidth]{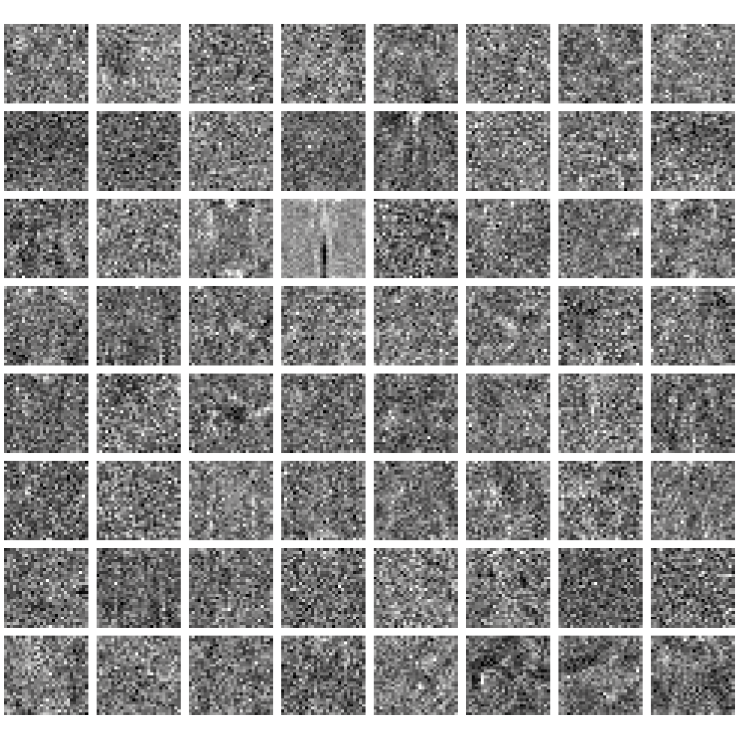}}
    \caption{784-64-64-10 ReLU-type LRRN first layer filters.}
\end{figure}

\subsection{Supervised learning with unsupervised pretraining}

Three 784-64-64-10 LRRNs were trained on MNIST, KMNIST and FMNIST in an
unsupervised manner by minimizing the free energy. A learning rate of
$\eta=0.005$ was used for all the networks. Furthermore $\beta = [1, 1, 0]$,
$\gamma = 1/8$, and a mini-batchsize of 10. The resulting filters are shown in
Fig.~\ref{fig:filters_unsup}. The networks were then trained with supervision
for additional 100 epochs. Fig.~\ref{fig:filters_finetuned} depicts the
fine-tuned first layer filters, which retained most of their interpretable
appearance. Fig.~\ref{fig:semi_pretrained_results} shows the learning progress
in terms of training loss and test accuracies. Pretraining leads to slightly
better classification results, especially for the KMNIST dataset (85.8\% vs.\
87.91\%).

\begin{figure}[htb]
  \centering
  \includegraphics[width=0.49\textwidth]{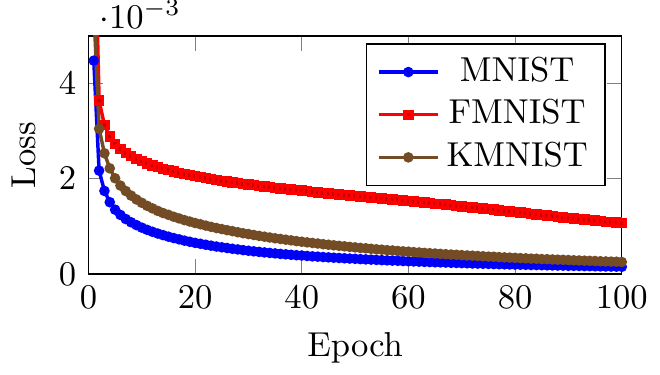}
  \includegraphics[width=0.49\textwidth]{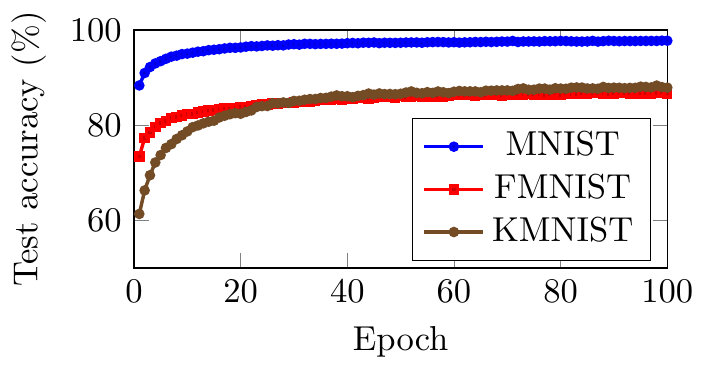}
  \caption{Loss and classification accuracies for the supervised training
    phase of a 784-64-64-10 ReLU-type LRRN, which has undergone unsupervised
    pretraining. An accuracy of 97.8\% is achieved for MNIST, 87.9\% for
    KMNIST, and 86.7\% for FMNIST}
  \label{fig:semi_pretrained_results}
\end{figure}

\begin{figure}[htb]
    \centering
    \subfigure[MNIST filters]{\includegraphics[width=0.32\textwidth]{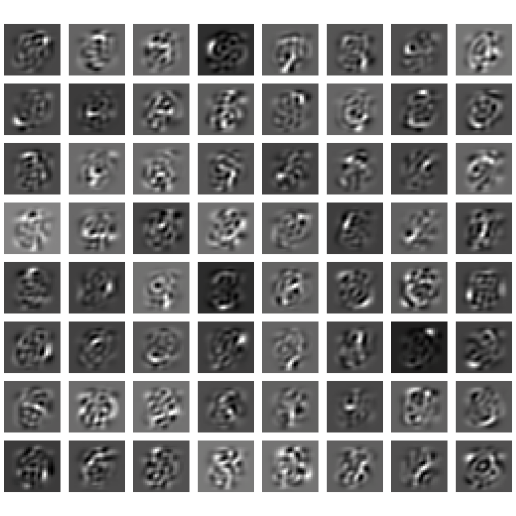}}~~
    \subfigure[KMNIST filters]{\includegraphics[width=0.32\textwidth]{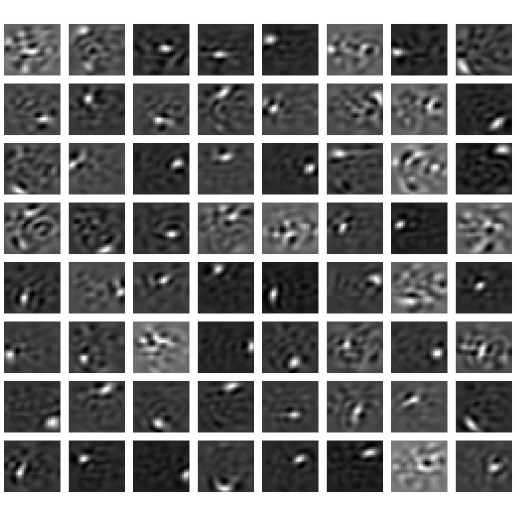}}~~
    \subfigure[FMNIST filters]{\includegraphics[width=0.32\textwidth]{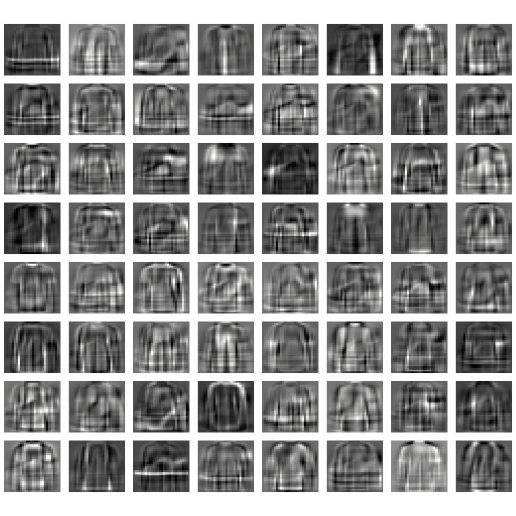}}

    \caption{First layer filters of 784-64-64-10 ReLU-LRRNs after 20 epochs of unsupervised training.}
    \label{fig:filters_unsup}
\end{figure}

\begin{figure}[htb]
  \centering
  \subfigure[MNIST filters]{\includegraphics[width=0.32\textwidth]{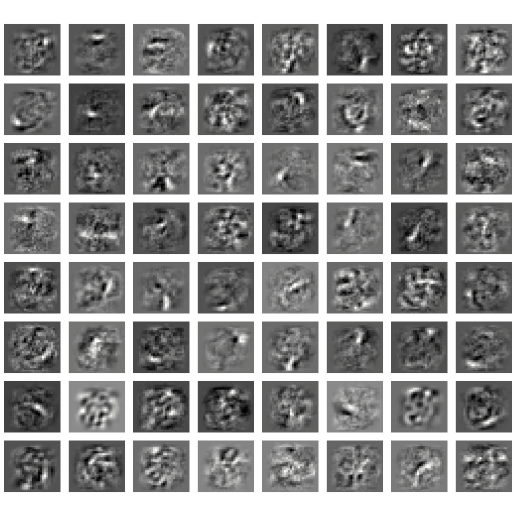}}~~
  \subfigure[KMNIST filters]{\includegraphics[width=0.32\textwidth]{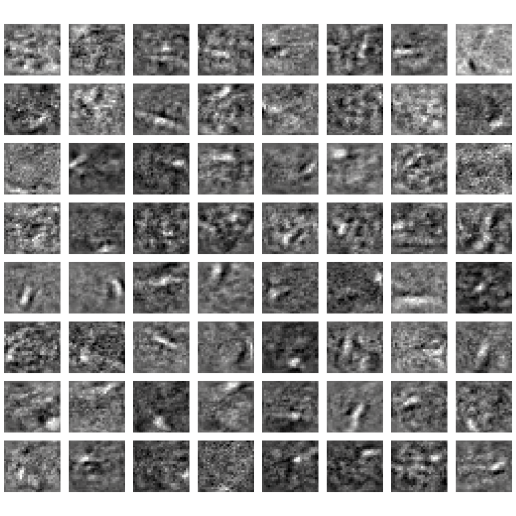}}~~
  \subfigure[FMNIST filters]{\includegraphics[width=0.32\textwidth]{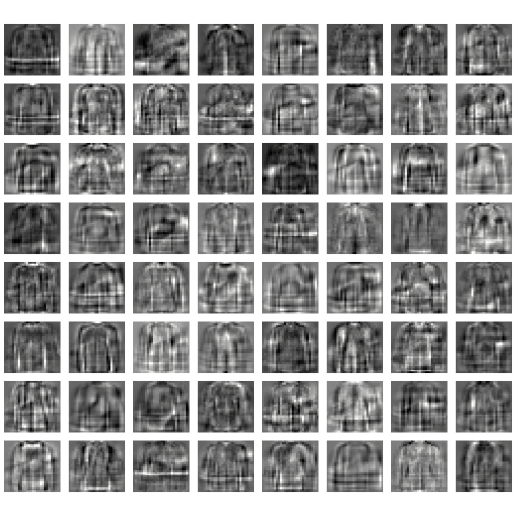}}

  \caption{First layer filters of the same 784-64-64-10 ReLU-LRRNs as shown in Fig.~\ref{fig:filters_unsup} after an additional 100 epochs of supervised training.}
  \label{fig:filters_finetuned}
\end{figure}

\subsection{The impact of weight decay on the Lipschitz estimates}

Tables~\ref{tab:decay1} and~\ref{tab:decay2} list the estimates for the
Lipschitz constants $\rho$, the classification margin $m$ and the allowed
$\ell_2$ norm $\delta$ for safe perturbations for two different weights on the
weight decay term ($5\times 10^{-5}$ and $5\times 10^{-4}$). The upper bound
on $\delta$ has been calculated via
$\delta \leq \frac{m}{\sqrt{2}\rho}$. Higher weight decay regularization
clearly induces a tradeoff between accuracy and perturbation robustness.

\begin{table}[htb]
  \centering
  \begin{tabular}{l|l|l|l|l|l|l}
    & $\rho$    & $mean(m)$ & $median(m)$ & $std(m)$ & Median $\ell_2$ norm $\delta$ & test accuracy \\ \hline
    MNIST  & 0.9387 & 0.8299    & 0.9270      & 0.2592   & 0.698 & 97.2\%  \\ \hline
    KMNIST & 0.9548 & 0.6195    & 0.6827      & 0.3442   & 0.506 & 85.6\%  \\ \hline
    FMNIST & 1.0716 & 0.6318    & 0.6913      & 0.3397   & 0.456 & 85.7\%
  \end{tabular}
  \caption{Lipschitz value of models trained on the three datasets (MNIST, KMNIST and FMNIST) and mean, median and standard deviation of classification margins. Weight decay factor $5\times 10^{-5}$.}
  \label{tab:decay1}
\end{table}

\begin{table}[htb]
  \centering
  \begin{tabular}{l|l|l|l|l|l|l}
    & $\rho$    & $mean(m)$ & $median(m)$ & $std(m)$ & Median $\ell_2$ norm $\delta$ & test accuracy \\ \hline
    MNIST  & 0.4604 & 0.6581  & 0.7038 & 0.2899 & 1.0810 & 95.5\% \\ \hline
    KMNIST & 0.4174 & 0.4402  & 0.4101 & 0.3051 & 0.6948 & 80.5\% \\ \hline
    FMNIST & 0.4142 & 0.5006  & 0.4776 & 0.3267 & 0.8153 & 82.4\%
  \end{tabular}
  \caption{Lipschitz value of models trained on the three datasets (MNIST, KMNIST and FMNIST) and mean, median and standard deviation of classification margins. Weight decay factor $5\times 10^{-4}$.}
  \label{tab:decay2}
\end{table}

\end{document}